\pdfoutput=1
\documentclass[letterpaper]{article} 
\usepackage{aaai24}  
\usepackage{times}  
\usepackage{helvet}  
\usepackage{courier}  
\usepackage[hyphens]{url}  
\usepackage{graphicx} 
\urlstyle{rm} 
\usepackage{natbib}  
\usepackage{caption} 
\frenchspacing  
\setlength{\pdfpagewidth}{8.5in}  
\setlength{\pdfpageheight}{11in}  
%
\usepackage{algorithm}
\usepackage{algorithmic}

\usepackage{nicefrac}
\usepackage{amsmath}
\usepackage{amsthm}
\usepackage{amsfonts}
\usepackage{multirow}
\usepackage{booktabs}
\usepackage{subcaption}
\usepackage{colortbl}
\usepackage[dvipsnames]{xcolor}
\newtheorem{proposition}{Proposition}
\definecolor{tabhighlight}{HTML}{e5e5e5}

\newcommand{\tableCellHeight}{1}
\newcommand{\tabstyle}[1]{
	\setlength{\tabcolsep}{#1}
	\renewcommand{\arraystretch}{\tableCellHeight}
	\centering
	\small
}

%
\usepackage{newfloat}
\usepackage{listings}
\DeclareCaptionStyle{ruled}{labelfont=normalfont,labelsep=colon,strut=off} 
\lstset{%
	basicstyle={\footnotesize\ttfamily},
	numbers=left,numberstyle=\footnotesize,xleftmargin=2em,
	aboveskip=0pt,belowskip=0pt,%
	showstringspaces=false,tabsize=2,breaklines=true}
\floatstyle{ruled}
\newfloat{listing}{tb}{lst}{}
\floatname{listing}{Listing}
%
\pdfinfo{
/TemplateVersion (2024.1)
}

\setcounter{secnumdepth}{0} 

%


\title{Weak Distribution Detectors Lead to Stronger Generalizability of Vision-Language Prompt Tuning}

\author {
    Kun Ding\textsuperscript{\rm 1,\rm 3},
    Haojian Zhang\textsuperscript{\rm 2},
    Qiang Yu\textsuperscript{\rm 3},
    Ying Wang\textsuperscript{\rm 1,\rm 3}\footnote{Corresponding author.},
    Shiming Xiang\textsuperscript{\rm 1,\rm3},
    Chunhong Pan\textsuperscript{\rm 3}
}
\affiliations {
    \textsuperscript{\rm 1}State Key Laboratory of Multimodal Artificial Intelligence Systems, Institute of Automation, Chinese Academy of Sciences\\
    \textsuperscript{\rm 2}Engineering Laboratory for Intelligent Industrial Vision, Institute of Automation, Chinese Academy of Sciences\\
    \textsuperscript{\rm 3}Research Center of Aerospace Information, Institute of Automation, Chinese Academy of Sciences\\
    \{kun.ding, zhanghaojian2014, qiang.yu\}@ia.ac.cn, \{ywang, smxiang, chpan\}@nlpr.ia.ac.cn
}

\usepackage{bibentry}
\nocopyright

\begin{document}

\maketitle

\begin{abstract}
We propose a generalized method for boosting the generalization ability of pre-trained vision-language models (VLMs) while fine-tuning on downstream few-shot tasks. The idea is realized by exploiting out-of-distribution (OOD) detection to predict whether a sample belongs to a base distribution or a novel distribution and then using the score generated by a dedicated competition based scoring function to fuse the zero-shot and few-shot classifier. The fused classifier is dynamic, which will bias towards the zero-shot classifier if a sample is more likely from the distribution pre-trained on, leading to improved base-to-novel generalization ability. Our method is performed only in test stage, which is applicable to boost existing methods without time-consuming re-training. Extensive experiments show that even weak distribution detectors can still improve VLMs' generalization ability. Specifically, with the help of OOD detectors, the harmonic mean of CoOp~\cite{CoOp} and ProGrad~\cite{ProGrad} increase by 2.6 and 1.5 percentage points over 11 recognition datasets in the base-to-novel setting.
\end{abstract}

\section{Introduction}
Many large vision-language models (VLMs) trained on web-scale image-text pairs have emerged continuously in recent years (e.g., CLIP~\cite{CLIP}, ALIGN~\cite{ALIGN}) and they are proved to be very useful on many pure vision and multi-modal tasks, such as few-shot learning~\cite{CoOp}, image segmentation~\cite{DenseCLIP} and image captioning~\cite{ClipCap}. The huge parameter count of these models poses extreme challenges for classical fine-tuning techniques, which are both parameter inefficient and of low generalization capability. Vision-language prompt tuning (VLPT) has recently been proposed as an alternative technique~\cite{CoOp,zhou2022cocoop,ProGrad,kgcoop23}, which is demonstrated to be more parameter efficient. 

\begin{figure}[!t]
	\centering
	\includegraphics[width=0.85\columnwidth]{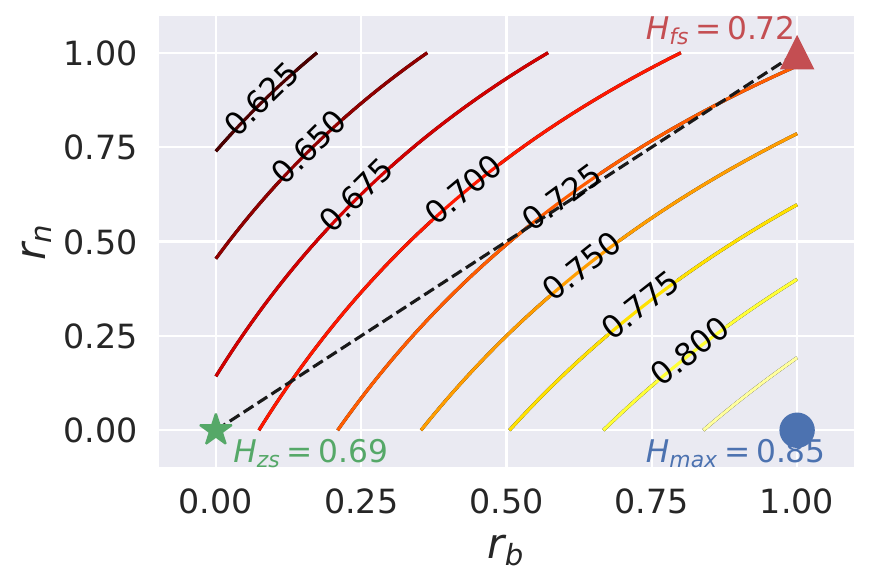}
	\caption{The relation between harmonic mean and ID prediction accuracies when $\alpha=\infty$ (in Eq.~\ref{eq:score_func}). $r_b$ and $1-r_n$ are the in-distribution prediction accuracy of samples in base and novel set. The maximum harmonic mean $H_{max}$ is acquired when $r_b=1$ and $r_n=0$. $H_{fs}$ and $H_{zs}$ denote the harmonic mean obtained by applying the few-shot and zero-shot classifier on all test samples, respectively.}
	\label{fig:harm_mean}
\end{figure}

However, it is hard for VLPT to balance the performance between base and novel classes. The first VLPT method CoOp~\cite{CoOp} achieves superior performance on base classes of 11 recognition datasets. Compared to zero-shot CLIP, the mean accuracy on novel classes drops 11 points (from 74.22\% to 63.22\%). CoCoOp~\cite{zhou2022cocoop} alleviates the above problem by conditioning prompts on image feature. CoCoOp improves the novel class accuracy in the sacrifice of base class accuracy and the model design is quite heavy weighted. ProGrad~\cite{ProGrad} prevents prompt tuning from forgetting the general knowledge in VLMs by only updating the prompt whose gradient is aligned to the general direction. KgCoOp~\cite{kgcoop23} introduces an auxiliary loss to reduce the discrepancy between the textual embeddings generated by learned prompts and hand-crafted prompts. 

Different from the above methods, we propose a new viewpoint for addressing the trade-off problem between base and novel classes. This viewpoint is motivated by the fact: zero-shot CLIP excels on novel classes while CoOp excels on base classes~\cite{zhou2022cocoop}. Therefore, if it is possible to know a sample comes from the base distribution or the novel distribution beforehand and choose the right classifier accordingly, such a trade-off problem will be addressed perfectly. The relation between the base-to-novel evaluation metric (harmonic mean) and the in-distribution (ID) prediction accuracies is studied in Fig.~\ref{fig:harm_mean}, which implies that the harmonic mean of the fused classifier with binary scoring function (in Eq.~\ref{eq:score_func}) increases with ID detection accuracies ($r_b$ and $1-r_n$ on base and novel set respectively). The problem of discriminating a sample from which distribution is similar to the out-of-distribution (OOD) detection problem. As such, we could convert the base-to-novel trade-off problem to an OOD detection problem.

Specifically, we design a competition based scoring function which exploits the ID scores generated by existing OOD detection methods to generate a score between 0 and 1 for a test sample and then use this score to dynamically fuse the pre-trained zero-shot classifier and the few-shot classifier tuned on limited training data. Different from the scoring functions in prior OOD detection methods, the proposed scoring function is based on the comparison between the outputs of two classifiers. The proposed fusion method is dynamic, i.e. the fusion weights vary from different samples, which is better than static method. Also, the fusion is implemented in test stage, making it easy for the integration in existing methods without re-training.

We conduct extensive experiments on 15 recognition datasets under different settings. In the base-to-novel setting, our method improves the harmonic mean accuracy of CoOp and ProGrad by 2.6\% and 1.5\%, respectively. In the domain generalization setting, the mean accuracies over target domains are competitive, either. The main contributions of this work are listed as follows:
\begin{itemize}
	\item We propose a new viewpoint for dealing with the base-to-novel generalization problem of VLPT by introducing OOD detection.
	\item We propose a competition based scoring function for fusing the few-shot and zero-shot classifier dynamically.
	\item We demonstrate by experiment even weak distribution detectors can still improve the generalizability of
	existing VLPT methods.
\end{itemize}

\section{Related Work}

\textbf{Vision-Language Models:} Recent years have witnessed lots of vision-language models, such as CLIP~\cite{CLIP}, ALIGN~\cite{ALIGN}, BLIP~\cite{BLIP}. Benefited from massive image-text pairs, large vision-language models have demonstrated huge potential on many vision tasks, including but not limited to zero-shot learning~\cite{CLIP, TTPT}, few-shot learning~\cite{CoOp}, image segmentation~\cite{DenseCLIP}, image generation~\cite{DALLE2}, etc. The key problem of exploiting so many pre-trained VLMs is how to fine-tune them on downstream tasks in a parameter efficient way.

\noindent\textbf{Vision-Language Prompt Tuning:} As one of the most popular parameter efficient tuning techniques, prompt tuning is first proposed in NLP and subsequently introduced into vision-language domain by~\citet{CoOp}. Instead of manually designing prompts, prompt tuning learns soft context embeddings from data, which avoids cumbersome handcrafts. Based on CoOp, lots of efforts have been made to solve the problems in certain aspects or expand application scope. For example, CoCoOp~\cite{zhou2022cocoop}, ProGrad~\cite{ProGrad} and KgCoOp~\cite{kgcoop23} are devised to reduce the loss of generalization on unseen tasks. DualCoOp~\cite{DualCoOp} extends CoOp to the multi-label recognition task. SoftCPT~\cite{SoftCPT} exploits the cross-task knowledge to further enhance the generalization ability. PLOT~\cite{PLOT} introduces optimal transport to achieve fine-grained matching between multiple prompts and visual regions. Prompt tuning is also introduced in image segmentation~\cite{DenseCLIP} and video relation detection~\cite{CompPT}.

\noindent\textbf{OOD Detection:} OOD detection is to identify outliers in an open sample space not belonging to any seen training classes, which is especially vital for safety-critical circumstances. According to~\citet{OODsurvey}, OOD detection methods can be categorized into classification-based, density-based, distance-based and reconstruction-based methods. Post-hoc detection is an important member of classification-based methods that does not need extra training data. Given a trained model, post-hoc methods calculate a score to indicate the ID-ness or OOD-ness. \citet{MSP} proposed the maximum softmax probabilities (MSP), which is a widely-used baseline. Considering that MSP is problematic for large-scale settings with hundreds or thousands of classes, MaxLogit~\cite{MaxLogit} that uses the maximum unnormalized logit as anomaly score was proposed. \citet{EnergyBasedOOD} proposed an energy based indicator that has good theoretical interpretability. More recently, \citet{DecoupleMaxLogit} decoupled MaxLogit into cosine similarity and logit norm and introduced a tunable factor between them for better performance. Based on these works we propose a novel scoring function tailored for VLPT, which considers two classifiers jointly, i.e. zero-shot and few-shot classifier. This is different from the above methods that only use one trained classifier to design scoring function.

\noindent\textbf{Dynamic Neural Networks:} Designing instance-aware dynamic neural networks is an important research topic in the literature. Dynamic convolution~\cite{ChenDLCYL20,HouK21,li2022omnidimensional} learns a linear combination of multiple static kernels with input-dependent dynamic attention. Such a design effectively increases model capability without increasing FLOPs a lot. Dynamic quantization~\cite{DyQuan} learns quantization configuration for each image and layer individually, which further reduces computation cost. The Mixture-of-Expert (MoE) layer contains many expert sub-networks and learns a gating function to route individual inputs to some of the experts~\cite{ShazeerMMDLHD17}. It substantially increases the model capacity without introducing large computation overhead. Our proposed method dynamically fuses two classifiers, it can also be seen as a dynamic network. However, it is a test-time method as it does not need re-training.

\section{Background}
\citet{CLIP} demonstrated the contrastive learning based vision-language model CLIP has strong zero-shot recognition ability. The good generability is believed to be benefited from natural language supervision and diverse large-scale training data. CLIP has a two-stream structure: one image encoder for representing image as visual embedding and one text encoder for representing text as text embedding. Due to the contrastive learning based design, CLIP can be easily adapted for zero-shot and few-shot recognition.

\noindent\textbf{Zero-shot CLIP.} We denote the image and text encoder in CLIP as $E_{img}$ and $E_{txt}$, respectively. For a zero-shot classification task of $N$ classes, \citet{CLIP} used the classifier generated by the text encoder for classification. The $N$ class names are assumed to be accessible and the classifier weight (an unnormalized column vector) associated to the $i$-th class ($i\in \{1,2,\cdots,N\}$) is obtained by encoding its class name, i.e., $\mathbf{w}_{i}^{zs}=E_{txt}(\text{``[class name]''})$. Adding a prefix string to class name is demonstrated to be useful, that is, $\mathbf{w}_{i}^{zs}=E_{txt}(\text{``a photo of a [class name]''})$. Collecting the weights of all classes into a matrix results in $\mathbf{W}^{zs}=[\mathbf{w}_1^{zs},\cdots,\mathbf{w}_{N}^{zs}]^T$. Given an image $I$, its visual embedding $\mathbf{x}$ is extracted by $\mathbf{x}=E_{img}(I)$, which is an unnormalized vector with a same length to $\mathbf{w}_{i}^{zs}$. To classify $I$, zero-shot CLIP computes the posterior probability of a certain class $y$ as follows:
\begin{align}
p^{zs}(y|\mathbf{x}) = {\text{exp}(o^{zs}_y(\mathbf{x}))} / {\sum\nolimits_{i=1}^N \text{exp}(o^{zs}_i(\mathbf{x}))},\label{eq:zs_classify}
\end{align}
where $o^{zs}_y(\mathbf{x})=d(\mathbf{x},\mathbf{w}_y^{zs})/\tau$ is the logit of class $y$ with $\tau$ a temperature parameter and $d(\cdot)$ the cosine similarity.

\noindent\textbf{Context Optimization (CoOp).} Vision-language prompt tuning (VLPT) methods like CoOp and its successors (ProGrad, KgCoOp, etc.) extend zero-shot CLIP for few-shot classification. Meantime, the fixed prompt prefix is upgraded as a learnable prompt context. They defines a sequence of $M$ vectors as the context denoted as a matrix $\mathbf{V}=[\mathbf{v}_1,\cdots,\mathbf{v}_{M}]$. For the $i$-th class, the class token embeddings can be represented as a matrix $\mathbf{C}_i$. The classifier weight associated to class $i$ is computed by encoding $[\mathbf{V}, \mathbf{C}_i]$ as a text embedding: $\mathbf{w}^{fs}_i=E_{txt}([\mathbf{V}, \mathbf{C}_i])$. Collecting the weights of all classes into a matrix results in $\mathbf{W}^{fs}=[\mathbf{w}_1^{fs},\cdots,\mathbf{w}_{N}^{fs}]^T$. To classify an image of encoded feature $\mathbf{x}$, the following posterior probability of a certain class $y$ is calculated:
\begin{align}
p^{fs}(y|\mathbf{x}) = {\text{exp}(o^{fs}_y(\mathbf{x}))} / {\sum\nolimits_{i=1}^N \text{exp}(o^{fs}_i(\mathbf{x}))},\label{eq:fs_classify}
\end{align}
where $o^{fs}_y(\mathbf{x}) = d(\mathbf{x},\mathbf{w}_y^{fs})/\tau$ is the logit of the $y$-th class. VLPT methods optimize the context $\mathbf{V}$ while fixing the image and text encoder. To align with upstream task, the negative log likelihood loss is widely adopted.

\section{Method}
\label{sec:method}
The flowchart of the proposed method is presented in Fig.~\ref{fig:method}. For an image, we extract the visual embedding vector. With hand-crafted prompts, we compute the zero-shot classifier. With learnable prompts, we compute the few-shot classifier. Subsequently, two posterior probability distributions, which serve as the inputs of OOD detector, are computed using these two classifiers. The OOD detector outputs in-distribution score for a test sample. The two in-distribution scores, $ids^{zs}(\mathbf{x})$ and $ids^{fs}(\mathbf{x})$, are compared by a scoring function $s(\mathbf{x})$. This final score is used to dynamically weight the two classifiers. In the following, we detail the key ingredients of the proposed method. Besides, we further analyze the relation between harmonic mean accuracy and OOD detection accuracies.

\begin{figure}[t]
	\centering
	\includegraphics[width=1\columnwidth]{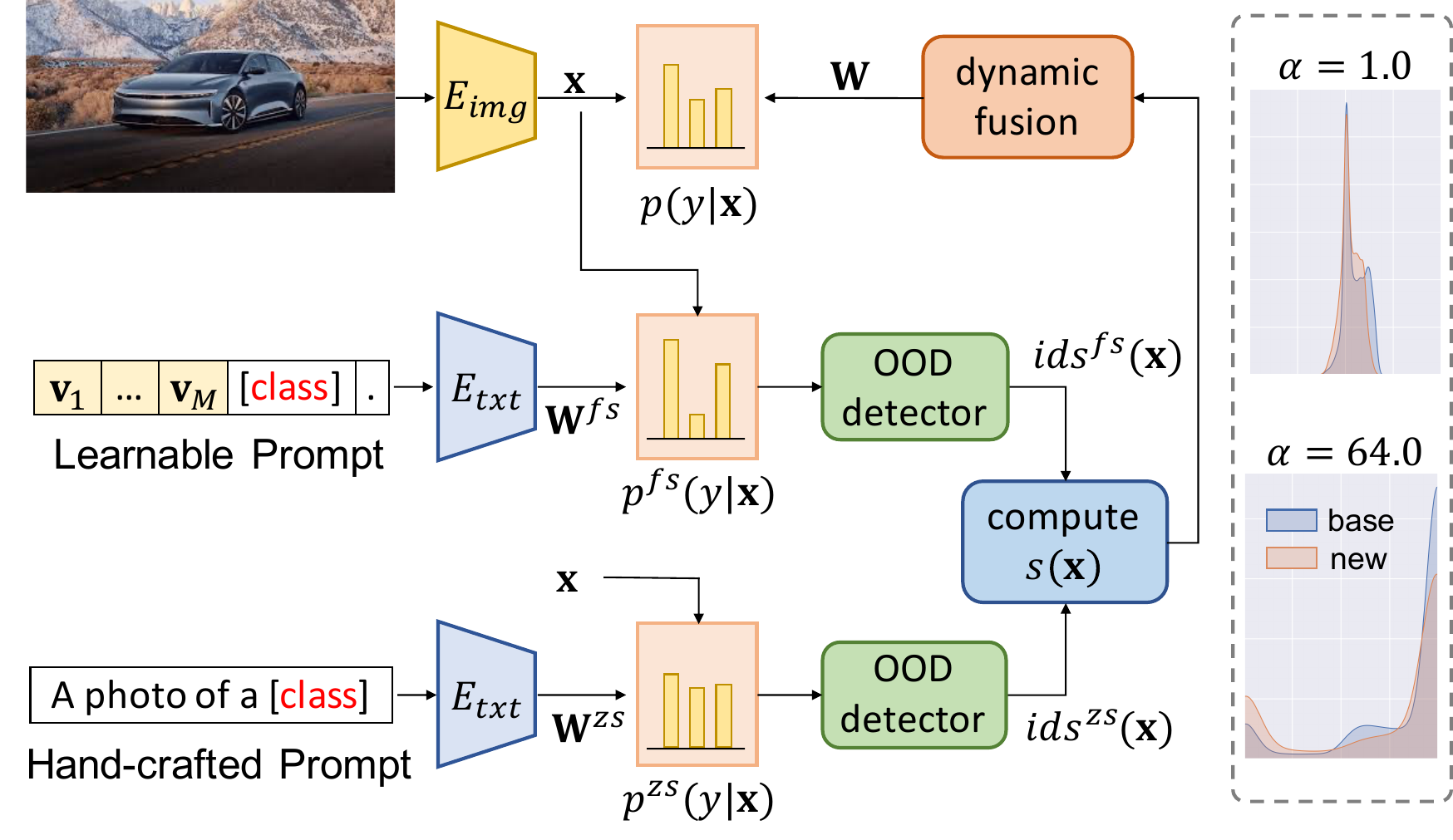}
	\caption{The framework of the proposed method.}
	\label{fig:method}
\end{figure}

\subsection{OOD Detectors}
Existing OOD detection methods calculate a score to reflect ID-ness or OOD-ness. By choosing a valid threshold and thresholding the score with it, binary prediction can be obtained. This work adopts post-hoc methods due to their simplicity. The original scores proposed in these method are converted to values reflecting ID-ness. We use three existing methods (MSP, MaxLogit and Energy) and also consider a new entropy based method. Actually, many other OOD detection methods can also be used.

MSP~\cite{MSP} uses the maximum softmax probability of a classifier as the ID score, that is
\begin{align}
ids^{*}(\mathbf{x}) = \max_i p^{*}(i|\mathbf{x}),
\end{align}
where $*$ is $zs$ or $fs$. $ids^{zs}(\mathbf{x})$ and $ids^{fs}(\mathbf{x})$ denote the in-distribution score corresponding to the zero-shot and few-shot classifier, respectively.

MaxLogit~\cite{MaxLogit} uses the maximum logit of a classifier as the ID score, that is
\begin{align}
ids^{*}(\mathbf{x}) = \tau \cdot \max_i o^{*}_i(\mathbf{x}),
\end{align}
where $\tau$ is multiplied for normalizing the final score into the range $[-1,1]$.

In~\citet{EnergyBasedOOD}, the free energy function over $\mathbf{x}$ in terms of the denominator of the softmax function is used as the OOD score. Accordingly, the negative free energy is used as the ID score:
\begin{align}
ids^{*}(\mathbf{x}) = (\log N+1/\tau)^{-1} \cdot\!\tau\!\cdot\!\log \sum\nolimits_i^N \exp(o^{*}_i(\mathbf{x})), 
\end{align}
where $(\log N+1/\tau)^{-1}$ is a normalizing factor.

The entropy of a softmax probability distribution reflects the OOD-ness of a sample. Larger entropy means the sample is more likely from a different distribution. We thus use the negative entropy as the ID score:
\begin{align}
ids^{*}(\mathbf{x}) = (\log N)^{-1} \cdot \sum\nolimits_i^{N} p^{*}(i|\mathbf{x})\log p^{*}(i|\mathbf{x}),
\end{align}
where $(\log N)^{-1}$ is a normalizing factor.

\subsection{Competition Based Scoring Function}
We propose a scoring function to indicate a sample is more likely from the base or novel distribution. The intuition is that if $ids^{fs}(\mathbf{x})-ids^{zs}(\mathbf{x})$ is larger, it has a higher probability that the sample is from the base distribution (also the distribution fine-tuned on). To obtain a valid weight, the score should be normalized into [0,1], thus, we introduce a sigmoid function $\sigma(x)$ to normalize the above difference. Formally, the scoring function is defined as
\begin{align}
s(\mathbf{x})=\sigma(\alpha\cdot (ids^{fs}(\mathbf{x})-ids^{zs}(\mathbf{x}))),\label{eq:score_func}
\end{align}
where $\alpha$ is a tunable scaling factor, which controls the dynamic range of the score (ref. Fig.~\ref{fig:method}). Not that, applying sigmoid function over the difference equals to applying the softmax function over $ids^{fs}(\mathbf{x})$ and $ids^{zs}(\mathbf{x})$, implying the existence of competition. Note that, with a sufficiently large $\alpha$, the sigmoid function equals to a Heaviside step function. In this case, it actually makes a binary prediction.

\subsection{Dynamic Fusion}
When a sample is more likely from the novel distribution, it is better to use the zero-shot classifier; conversely, when it is more likely from the base distribution, it is better to use the few-shot classifier. A more general method is weighted combining the few-shot and zero-shot classifier with $s(\mathbf{x})$:
\begin{align}
\mathbf{W}=s(\mathbf{x})\cdot \mathbf{W}^{fs} + (1-s(\mathbf{x}))\cdot \mathbf{W}^{zs}. \label{eq:fuse_weight}
\end{align}
By this way, the fused classifier is instance-conditioned, which dynamically changes with different inputs. Note that other strategies such as weighted combination of the posterior probabilities from the two classifiers are also possible.

\subsection{Analysis of Harmonic Mean}
The harmonic mean performance of the classifier in Eq.~\ref{eq:fuse_weight} has direct relation to OOD detection performance. In the base-to-novel evaluation setting of VLPT, there are two test sets of non-overlapping classes: base set and novel set. Let us denote $r_b$ and $r_n$ as the probability of predicting a sample in the base and novel set as a sample from base distribution, respectively. The classification accuracy of the few-shot classifier $\mathbf{W}^{fs}$ and the zero-shot classifier $\mathbf{W}^{zs}$ on base set are denoted as $p_1$ and $p_0$, respectively. The classification accuracy of the few-shot classifier $\mathbf{W}^{fs}$ and the zero-shot classifier $\mathbf{W}^{zs}$ on novel set are denoted as $q_1$ and $q_0$, respectively. We have the following proposition:

\begin{proposition}
	The harmonic mean accuracy $H$ of the fused classifier $\mathbf{W}$ with the Heaviside step function used on base and novel set can be represented as:
	\begin{align}
	H &= {2P_bP_n}/{(P_b+P_n)},\\
	P_b&=p_1r_b+p_0(1-r_b), \\
	P_n&=q_1r_n+q_0(1-r_n),
	\end{align}
\end{proposition}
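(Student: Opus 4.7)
The plan is to reduce the fused classifier to a hard per-sample selection, compute the resulting per-set accuracies by conditioning on the detector's binary decision, and then apply the definition of the harmonic mean. The key observation is that when $s(\mathbf{x})$ is the Heaviside step function, its output is either $0$ or $1$, so Eq.~\ref{eq:fuse_weight} degenerates to $\mathbf{W}=\mathbf{W}^{fs}$ whenever the detector declares the sample in-distribution and $\mathbf{W}=\mathbf{W}^{zs}$ otherwise.

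First I would compute the accuracy of the fused classifier on the base set by the law of total probability, partitioning the event ``sample is correctly classified'' over the two possible routing outcomes. A random base sample is routed to $\mathbf{W}^{fs}$ with probability $r_b$ and to $\mathbf{W}^{zs}$ with probability $1-r_b$. Treating $p_1$ and $p_0$ as the marginal accuracies of the two classifiers on base samples, the overall base-set accuracy is
\begin{equation*}
P_b = r_b \cdot p_1 + (1-r_b)\cdot p_0.
\end{equation*}
A symmetric argument on the novel set, using that a novel sample is routed to $\mathbf{W}^{fs}$ with probability $r_n$ and to $\mathbf{W}^{zs}$ with probability $1-r_n$, yields $P_n = r_n \cdot q_1 + (1-r_n)\cdot q_0$.

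Finally I would substitute these two expressions into the defining formula of the harmonic mean of two quantities, $H = 2P_b P_n/(P_b+P_n)$, which is exactly the claimed identity. All three displayed equations in the proposition then drop out in one line each.

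The only subtle point, which I would flag as the main obstacle, is the independence assumption implicit in the decomposition $P_b = r_b p_1 + (1-r_b)p_0$. Strictly speaking, the accuracy of $\mathbf{W}^{fs}$ restricted to the sub-population of base samples actually routed to it could differ from its accuracy on the entire base set, because the detector's routing decision and the classifier's correctness may be correlated. The proposition's statement adopts the convention that $p_1,p_0,q_1,q_0$ are overall per-set accuracies, which is equivalent to assuming that, for each classifier, correctness is independent of the routing decision; I would make this convention explicit at the start of the proof so that the two-line calculation above is fully justified.
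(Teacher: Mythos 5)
Your proposal is correct and takes essentially the same approach as the paper: the paper's entire proof is the single line ``the proposition can be proved by the law of total probability,'' which is exactly the routing decomposition you spell out for $P_b$ and $P_n$ before substituting into the harmonic-mean formula. Your explicit flagging of the implicit assumption that correctness is independent of the routing decision (i.e., that $p_1,p_0,q_1,q_0$ may be read as the relevant conditional accuracies) is a legitimate refinement that the paper's one-line proof glosses over.
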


\begin{proof}
	The proposition can be proved by the law of total probability.
\end{proof}

\begin{table*}[!t]
	\tabstyle{4pt}
	\begin{subtable}[t]{.3\textwidth}
		\centering
		\renewcommand{\arraystretch}{0.9}
		\begin{tabular}{l cc|c}
			\toprule
			& Base & Novel & H \\
			\midrule
			CLIP & 69.16 & 74.13 & 71.56 \\
			CoOp & {82.29} & 67.63 & 74.24 \\
			CoOp+ & 82.20 & 72.15 & 76.85 \\
			CoCoOp & 80.12 & 72.36 & 76.04 \\
			CoCoOp+ & 79.26 & \textbf{74.37} & 76.74 \\
			ProGrad & 82.27 & 70.10 & 75.70\\
			ProGrad+ & 80.74 & 73.95 & {77.20}\\
			KgCoOp & 81.57 & 72.87 & 76.97 \\
			KgCoOp+ & 79.81 & 74.26 & 76.94\\
			Pro+Kg & \textbf{82.43} & 73.31 & \textbf{77.60}\\
			\bottomrule
		\end{tabular}
	\caption{{Average over 11 datasets}.}
	\end{subtable}
	\vspace{0.1em}
	\begin{subtable}[t]{.3\textwidth}
		\centering
		\renewcommand{\arraystretch}{0.9}
		\begin{tabular}{l cc|c}
			\toprule
			& Base & Novel & H \\
			\midrule
			CLIP & 72.38 & 68.10 & 70.17\\
			CoOp & {76.46} & 65.47 & 70.54\\
			CoOp+ & 76.21 & 68.87 & 72.35 \\
			CoCoOp & 75.86 & {70.59} & {73.13} \\
			CoCoOp+ & 75.16 & \textbf{70.65} & 72.84 \\
			ProGrad & {76.73}& 67.64& 71.90\\
			ProGrad+ & 75.79 & 69.37 &72.44 \\
			KgCoOp & 75.80 & 69.81 & 72.68\\
			KgCoOp+ & 74.67 & 69.57& 72.03\\
			Pro+Kg & \textbf{76.79} & 69.84 & \textbf{73.15} \\
			\bottomrule
		\end{tabular}
	\caption{ImageNet.}
	\end{subtable}
	~
	\begin{subtable}[t]{.3\textwidth}
		\centering
		\renewcommand{\arraystretch}{0.9}
		\begin{tabular}{l cc|c}
			\toprule
			& Base & Novel & H \\
			\midrule
			CLIP & 94.58 & {94.10} & 94.34 \\
			CoOp & {95.74} & 93.65 & 94.68 \\
			CoOp+ & 96.10 & 94.10 &  95.09\\
			CoCoOp & 95.85 & 94.45 & {95.14} \\
			CoCoOp+ & 95.85 & 94.25 & 95.04\\
			ProGrad & \textbf{96.41} & 93.69& 95.03\\
			ProGrad+ & 95.98 & 94.60 &95.29 \\
			KgCoOp & 95.89 & {94.74} & {95.31}\\
			KgCoOp+ & 95.68 & 94.61& 95.14\\
			Pro+Kg & 96.18 & \textbf{94.81}& \textbf{95.49}\\
			\bottomrule
		\end{tabular}
	\caption{Caltech101.}
	\end{subtable}
	~
	\begin{subtable}[t]{.3\textwidth}
		\centering
		\renewcommand{\arraystretch}{0.9}
		\begin{tabular}{l cc|c}
			\toprule
			& Base & Novel & H \\
			\midrule
			CLIP & 91.05 & 97.16 & 94.01 \\
			CoOp & 94.99 & 93.06 & 94.02 \\
			CoOp+ & 94.92 & 96.36 & 95.63 \\
			CoCoOp & {95.00} & \textbf{97.81} & {96.38} \\
			CoCoOp+ & 94.99 & 97.50 & 96.23\\
			ProGrad & {95.44}& 95.15& 95.29\\
			ProGrad+ & 94.60 & 97.42 & 95.99\\
			KgCoOp & 95.14 & 97.62 & 96.36\\
			KgCoOp+ & 94.39 & 97.55 & 95.94\\
			Pro+Kg & \textbf{95.50} & 97.64 & \textbf{96.56}\\
			\bottomrule
		\end{tabular}
	\caption{OxfordPets.}
	\end{subtable}
	\vspace{0.1em}
	\begin{subtable}[t]{.3\textwidth}
		\centering
		\renewcommand{\arraystretch}{0.9}
		\begin{tabular}{l cc|c}
			\toprule
			& Base & Novel & H \\
			\midrule
			CLIP & 63.74 & {74.89} & 68.87 \\
			CoOp & \textbf{78.37} & 67.49 & 72.52 \\
			CoOp+ & 77.22 & 72.47 &{74.77} \\
			CoCoOp & 70.73 & 72.70 & {71.70} \\
			CoCoOp+ & 69.28 & 74.50 & 71.80\\
			ProGrad & 76.91& 70.55& 73.59\\
			ProGrad+ & 74.16 & 73.99 &74.07 \\
			KgCoOp & 74.21 & 74.77 &74.49 \\
			KgCoOp+ & 70.65 & \textbf{75.54} & 73.01\\
			Pro+Kg & 76.61 & 73.54 & \textbf{75.04}\\
			\bottomrule
		\end{tabular}
	\caption{StanfordCars.}
	\end{subtable}
	~
	\begin{subtable}[t]{.3\textwidth}
		\centering
		\renewcommand{\arraystretch}{0.9}
		\begin{tabular}{l cc|c}
			\toprule
			& Base & Novel & H \\
			\midrule
			CLIP & 70.24 & \textbf{77.98} & 73.91 \\
			CoOp & \textbf{97.23} & 62.40 & 76.02 \\
			CoOp+ & 96.46 & 73.05 & 83.14 \\
			CoCoOp & 94.79 & 68.44 & {79.49} \\
			CoCoOp+ & 92.81 & 75.01 & 82.97\\
			ProGrad & 96.43& 69.10& 80.51\\
			ProGrad+ & 93.10 & 76.02 & {83.70}\\
			KgCoOp & 95.87 & 72.75 &82.72 \\
			KgCoOp+ & 92.84 & 75.78 & 83.45\\
			Pro+Kg & 96.50 & 74.26 & \textbf{83.93}\\
			\bottomrule
		\end{tabular}
	\caption{Flowers102.}
	\end{subtable}
	~
	\begin{subtable}[t]{.3\textwidth}
		\centering
		\renewcommand{\arraystretch}{0.9}
		\begin{tabular}{l cc|c}
			\toprule
			& Base & Novel & H \\
			\midrule
			CLIP & 89.77 & 91.29 & 90.52 \\
			CoOp & 89.54 & 90.14 & 89.84 \\
			CoOp+ & 90.52 & 91.61 & 91.06 \\
			CoCoOp & {90.55} & {91.38} & {90.96} \\
			CoCoOp+ & {90.61} & \textbf{91.88} & \textbf{91.24}\\
			ProGrad & 90.46& 90.54& 90.50\\
			ProGrad+ & 90.55 & 91.46 & 91.00\\
			KgCoOp & 90.40 & 91.72 & 91.06\\
			KgCoOp+ & 90.23 & 91.62 & 90.92\\
			Pro+Kg & \textbf{90.70} & 91.74 & 91.22\\
			\bottomrule
		\end{tabular}
	\caption{Food101.}
	\end{subtable}
	\vspace{0.1em}
	\begin{subtable}[t]{.3\textwidth}
		\centering
		\renewcommand{\arraystretch}{0.9}
		\begin{tabular}{l cc|c}
			\toprule
			& Base & Novel & H \\
			\midrule
			CLIP & 27.44 & \textbf{35.77} & {31.06} \\
			CoOp & \textbf{39.44} & 27.20 & 32.20 \\
			CoOp+ & 39.13 & 30.81 & 34.48 \\
			CoCoOp & 36.08 & 33.05 & 34.50 \\
			CoCoOp+ & 34.73 & 35.01 & {34.87}\\
			ProGrad & 39.43& 27.90& 32.68 \\
			ProGrad+ & 38.64 & 31.49 & 34.70\\
			KgCoOp & 37.94 & 31.31 & 34.31\\
			KgCoOp+ & 36.49 & 34.94 & \textbf{35.70}\\
			Pro+Kg & 39.14 & 30.36 & 34.20\\
			\bottomrule
		\end{tabular}
	\caption{FGVCAircraft.}
	\end{subtable}
	~
	\begin{subtable}[t]{.3\textwidth}
		\centering
		\renewcommand{\arraystretch}{0.9}
		\begin{tabular}{l cc|c}
			\toprule
			& Base & Novel & H \\
			\midrule
			CLIP & 69.39 & 75.53 & 72.33 \\
			CoOp & {81.17} & 70.39 & 75.40 \\
			CoOp+ & 81.10 & 75.30 &  78.09\\
			CoCoOp & 79.52 & {76.62} & {78.04} \\
			CoCoOp+ & 78.21 & \textbf{77.97} & 78.09\\
			ProGrad & {81.36}& 73.95& 77.48\\
			ProGrad+ & 79.80 & 77.29 & {78.52}\\
			KgCoOp & 80.76 & 76.05 & 78.33\\
			KgCoOp+ & 78.60 & 77.29 & 77.94\\
			Pro+Kg & \textbf{81.84} & 76.98 & \textbf{79.34}\\
			\bottomrule
		\end{tabular}
	\caption{SUN397.}
	\end{subtable}
	~
	\begin{subtable}[t]{.3\textwidth}
		\centering
		\renewcommand{\arraystretch}{0.9}
		\begin{tabular}{l cc|c}
			\toprule
			& Base & Novel & H \\
			\midrule
			CLIP & 53.94 & \textbf{57.97} & 55.88 \\
			CoOp & {78.74} & 45.81 & 57.92 \\
			CoOp+ & 79.09 & 47.99 &  59.73\\
			CoCoOp & 74.65 & 53.66 & {62.44} \\
			CoCoOp+ & 73.77 & 55.60 & 63.41\\
			ProGrad & 77.31& 48.95& 59.94\\
			ProGrad+ & 75.15 & 55.88 & 64.10\\
			KgCoOp & \textbf{80.09} & 52.98 & 63.77\\
			KgCoOp+ & 78.20 & {56.88} & \textbf{65.86}\\
			Pro+Kg & 80.05 & 53.74 & 64.31\\
			\bottomrule
		\end{tabular}
	\caption{DTD.}
	\end{subtable}
	~
	\begin{subtable}[t]{.3\textwidth}
		\centering
		\renewcommand{\arraystretch}{0.9}
		\begin{tabular}{l cc|c}
			\toprule
			& Base & Novel & H \\
			\midrule
			CLIP & 57.24 & {64.08} & 60.47 \\
			CoOp & {89.21} & 63.08 & 73.90 \\
			CoOp+ & 89.02 & 66.25 &  75.97\\
			CoCoOp & 86.02 & 66.43 & {74.97} \\
			CoCoOp+ & 85.22 & \textbf{70.38} & 77.09 \\
			ProGrad & \textbf{90.76}& 65.25& 75.92\\
			ProGrad+ & 88.36 & 69.38 & \textbf{77.73}\\
			KgCoOp & 88.11 & 64.37 & 74.39 \\
			KgCoOp+ & 84.70 & 64.86 & 73.46 \\
			Pro+Kg & 89.92 & 67.82 & 77.32\\
			\bottomrule
		\end{tabular}
	\caption{EuroSAT.}
	\end{subtable}
	~
	\begin{subtable}[t]{.3\textwidth}
		\centering
		\renewcommand{\arraystretch}{0.9}
		\begin{tabular}{l cc|c}
			\toprule
			& Base & Novel & H \\
			\midrule
			CLIP & 71.04 & \textbf{78.53} & 74.60 \\
			CoOp & {84.35} & 65.21 & 73.56 \\
			CoOp+ & \textbf{84.47} & 76.80 &  \textbf{80.45}\\
			CoCoOp & 82.28 & 70.87 & {76.15} \\
			CoCoOp+ & 81.28 & 75.39 & 78.22\\
			ProGrad & 83.77& 68.36& 75.28\\
			ProGrad+ & 82.02 & 76.53 & 79.18\\
			KgCoOp & 83.09 & 75.39 & 79.05 \\
			KgCoOp+ & 81.46 & 78.20 & 79.80\\
			Pro+Kg & 83.56 & 75.68 & 79.43\\
			\bottomrule
		\end{tabular}
	\caption{UCF101.}
	\end{subtable}
\caption{Comparison of different methods in the base-to-novel generalization setting using 16 shots. H: Harmonic mean.}
\label{tab:base_to_novel}
\end{table*}

We plot the contour map of $H$ with respect to $r_b$ and $r_n$ in Fig.~\ref{fig:harm_mean} with $p_0=0.6, p_1=0.9, q_0=0.8, q_1=0.6$. Each point in the figure represents an OOD detector of certain performance. The black dashed line corresponds to random OOD detectors. When moving towards the lower right corner, $H$ increases gradually. A relatively weak classifier (e.g., the classfiers on the 0.750 contour) can still improve the harmonic mean accuracy of the zero-shot and few-shot classifier, i.e., $H_{zs}=0.69$ and $H_{fs}=0.72$, respectively. This implies the feasibility of the proposed method.

\begin{table}[t]
	\tabstyle{3.5pt}
		\renewcommand{\arraystretch}{0.9}
		\begin{tabular}{l cccccc}
			\toprule
			& Source & \multicolumn{5}{c}{Target} \\
			\cmidrule(lr){2-2} \cmidrule(lr){3-7}
			& IN & INV2 & IN-Sketch & IN-A & IN-R & Avg. \\
			\midrule
			CLIP & 66.71 & 60.89 & 46.09 & 47.77 & 73.98 & 57.18 \\
			\midrule
			CoOp & 71.71 & 64.07 & 46.96 & 48.84 & 74.54 &58.60 \\
			CoOp+  & 71.43 & 64.09 & 48.09 & 49.65 & 75.59 & 59.36 \\
			\midrule
			CoCoOp & 71.16 & 64.35 & 48.87 & 50.73 & 75.90 & 59.96\\
			CoCoOp+ & 70.44 & 63.79 & 48.72& 50.68& 76.03& 59.81\\
			\midrule
			ProGrad & \textbf{72.24} & 64.86 & 48.18 & 49.91 & 75.50 & 59.61 \\
			ProGrad+  & 71.12 & 64.05 & 48.21 & 49.89 & 75.78 & 59.48 \\
			\midrule
			KgCoOp & 70.67 & 63.71 & 48.72 & 50.39 & \textbf{76.68} & 59.88 \\
			KgCoOp+  & 69.60 & 62.91 & 48.04 & 49.62 & 76.04 & 59.15 \\
			\midrule
			Pro+Kg & 72.02 & \textbf{64.92} & \textbf{49.06} & \textbf{50.84} & 76.62 & \textbf{60.36} \\
			\bottomrule
		\end{tabular}
	\caption{Comparison of methods with and without OOD detection in domain generalization setting. IN=ImageNet.}
\label{tab:dg}
\end{table}

\section{Experiment}
Similar to prior work, we evaluate our method under two settings: 1) base-to-novel generalization within a dataset; 2) domain generalization across datasets with the same classes.

\subsection{Setup}
\textbf{Dataset.} Following prior works~\cite{CoOp}, for the base-to-novel experiment, we use 11 image classification datasets: ImageNet~\cite{ImageNet} and Caltech101~\cite{Caltech101} for generic object classification; Food101~\cite{Food101}, StanfordCars~\cite{StanfordCars}, OxfordPets~\cite{OxfordPets} and Flowers102~\cite{Flowers102} and FGVCAircraft~\cite{FGVCAircraft} for fine-grained visual classification; UCF101~\cite{UCF101} for action recognition; EuroSAT~\cite{EuroSAT} for satellite image classification; DTD~\cite{DTD} for texture classification; SUN397~\cite{SUN397} for scene recognition. For the domain generalization experiment, we use ImageNet as the source domain and other four datasets, i.e. ImageNetV2~\cite{ImageNetV2}, ImageNet-Sketch~\cite{ImageNetS}, ImageNet-A~\cite{ImageNetA} and ImageNet-R~\cite{ImageNetR} as the target domains.

\noindent\textbf{Compared Methods.} We propose to use an OOD detector to enhance the generalization ability of the existing VLPT methods. To validate the versatility of our method, we use four existing methods as the baselines: CoOp~\cite{CoOp}, CoCoOp~\cite{zhou2022cocoop}, ProGrad~\cite{ProGrad} and KgCoOp~\cite{kgcoop23}. The results of these methods with and without OOD detector will be compared. Besides, zero-shot CLIP is also compared.

\noindent\textbf{Implementation Details.} With the public code of CoOp, CoCoOp, ProGrad and KgCoOp, we reproduce the results on the aforementioned datasets using the suggested parameters. For CoOp, the context length is 16 and random initialization is adopted. The batch size is 32 and 50 epochs are trained. For CoCoOp, the context is initialized by ``a photo of a", batch size is 1 and 10 epochs are trained. For ProGrad, the training setting is identical to CoOp and the two extra hyper-parameters are set to 1.0. For KgCoOp, the context is initialized by ``a photo of a" and the additional hyper-parameter $\lambda$ is set to be 8.0. Due to limited GPU memory, the batch size is set to be 32. Besides, 100 epochs are trained. All methods use the same learning rate scheduler and optimizer. Our reproduced results are slightly different from previously reported, however, the differences are minor. For our method, we insert the proposed test-time technique into these methods and conduct model inference on test sets. The entropy based ID score and $\alpha=64$ are used as default. We report the final performance averaged over three random seeds. The same evaluation metrics adopted in CLIP are used. All experiments are conducted on RTX A4000 GPU. Without further specification, ViT-B/16 is adopted as the default vision backbone.

\subsection{Base to Novel Generalization}
In this setting, we split the train and test samples in 11 datasets into two groups: base classes (\texttt{Base}) and novel classes (\texttt{Novel}). The two sets do not share any identical classes. We train VLPT methods on base classes and evaluate them on novel classes. When a VLPT method is enhanced by OOD detection, the corresponding method is annotated by a plus sign (+) in the name. The detailed results of 9 methods are reported in Table~\ref{tab:base_to_novel}.

Table~\ref{tab:base_to_novel}(a) shows the average results over 11 datasets, where zero-shot CLIP acquires the worst accuracy on base classes. This is because it uses fixed prompts which are sub-optimal compared to learned prompts on base set. However, CLIP's accuray on novel classes is quite appealing (better than CoOp, CoCoOp, ProGrad and KgCoOp), demonstrating its strong generalization ability across different classes.

CoOp obtains the highest \texttt{Base} accuracy, implying its strong fitting capability on base classes. In contrast, its \texttt{Novel} accuracy is the worst, showing its low base-to-novel generalization ability. By introducing OOD detection into CoOp, the \texttt{Novel} accuracy is improved significantly with sacrificing the \texttt{Base} accuracy a little. As such, the harmonic mean increases a lot, as large as 2.6\%. As for CoCoOp, CoCoOp+ improves the harmonic mean by 0.7\% with a slight drop of \texttt{Base} accuracy (i.e. 0.86\%) and a remarkable increase of \texttt{Novel} accuracy (i.e. 2.01\%). Besides, ProGrad+ improves the harmonic mean of ProGrad by 1.5\% and the \texttt{Novel} accuracy by 3.85\% with a slight sacrifice of \texttt{Base} accuracy, i.e. 1.53\%. It obtains a new state of the art of harmonic mean, i.e. 77.20\%, outperforming KgCoOp's best result, 76.97\%. Finally, KgCoOp+ and KgCoOp acquire the similar harmonic mean. One of the reasons the proposed method has little effect on KgCoOp is that it adds an extra loss function to enforce the few-shot and zero-shot classifier approach each other, which is less friendly for model fusion. Another reason is KgCoOp alreadly achieves excellent harmonic mean, i.e. 76.97\%, which is very near to the upper bound harmonic mean of fusing KgCoOp and zero-shot CLIP, i.e. 77.67\%. This small gap makes further improvement harder. By summarizing the above observations, OOD detection is useful in improving base-to-novel generalization ability of existing VLPT methods.

Table~\ref{tab:base_to_novel}(b)-(l) show results on each dataset. First, CoOp+, CoCoOp+, ProGrad+ and KgCoOp+ reduce the \texttt{Base} performance compared to CoOp, CoCoOp, ProGrad and KgCoOp on 6/11 datasets, 9/11 datasets, 10/11 datasets and 11/11 datasets, respectively. However, as shown in Table~\ref{tab:base_to_novel}(a), the average \texttt{Base} performance drops a little especially for CoOp and CoCoOp. Second, they improve the \texttt{Novel} performance on 11/11 datasets, 9/11 datasets, 11/11 datasets and 7/11 datasets, respectively, implying boosted performance on new classes. Finally, they improve the harmonic mean performance on 11/11 datasets, 8/11 datasets, 11/11 datasets and 4/11 datasets, respectively. Specially, CoOp+ obtains an obivous improvement of harmonic mean of 1.81\%, 2.25\%, 7.12\%, 2.28\%, 2.69\%, 2.07\% and 6.89\% on ImageNet, StandfordCars, Flowers102, FGVCAircraft, SUN397, EuroSAT and UCF101 upon CoOp, respectively. In summary, the results demonstrate OOD detection can improve base-to-novel generalization ability.

The proposed dynamic fusion is also suitable for two few-shot classifiers with simple modification. By replacing zero-shot CLIP with ProGrad in KgCoOp+, the OOD detection based dynamic fusion (Pro+Kg) can obtain better results compared to both KgCoOp and ProGrad under the base-to-novel generalization setting, as can be seen from Table~\ref{tab:base_to_novel}. Actually, Pro+Kg achieves 0.4\% higher harmonic accuracy compared to the best result reported above, i.e. 77.20\%. Such an effectiveness is due to the strong complementarity between KgCoOp and ProGrad.

\subsection{Domain Generalization}
Domain generalization evaluates the effect of distribution shift under the same classes. The prompts are learned on source domain (i.e., ImageNet) and transferred to target domains (i.e., ImageNetV2, etc.) for evaluation. The results of zero-shot CLIP and four prompt tuning methods with and without OOD detection are reported in Table~\ref{tab:dg}. From the results, OOD detection may reduce the performances on target domains slightly in some cases. Fortunately, the drops are marginal compared to the improvements in Table~\ref{tab:base_to_novel}. In Table~\ref{tab:dg}, almost all results of CoOp/CoCoOp/ProGrad/KgCoOp surpass CLIP. This is reasonable as the source and target domains share the classes, learning on source domain has positive effect on improving the recognition accuracy on target domains. As such, when fusing the zero-shot and few-shot classifier, more weights of the latter will gather around one. Meantime, the weaker zero-shot classifier may bring side effect for fusion. These can explain why OOD detection does not demonstrate significant improvements in this setting. Actually, OOD detection also works well in this setting if there is strong complementarity between the two classifiers being fused. From Table~\ref{tab:dg}, KgCoOp and ProGrad have strong complementarity on different datasets. As such, we replace zero-shot CLIP with ProGrad in KgCoOp+. This method is denoted as Pro+Kg. From Table~\ref{tab:dg}, Pro+Kg can achieve a new state of art, i.e. 60.36\% average accuracy on target domains. Meantime, the accuracy on source domain is also appealing. The above results suggest that OOD detection is also helpful for domain generalization. 

\begin{table}[t]
	\tabstyle{10pt}
	\renewcommand{\arraystretch}{0.8}
	\begin{tabular}{l c c c | c}
		\toprule
		Method & Weight & Base & Novel & H \\
		\midrule
		\multirow{ 5}{*}{Static} & 0.95 & 82.47 & 68.48 & 74.83 \\
		&0.75 & 81.07 & 70.76 &  75.56 \\
		&0.50 & 78.70 & 73.13 & 75.81 \\
		&0.25 & 75.40 & 74.85 & 75.12 \\
		&0.05 & 70.51 & 74.37 & 72.39 \\
		\midrule
		Dynamic&$\alpha\!=\!64$ & 82.20 & 72.15 & \textbf{76.85} \\
		\bottomrule
	\end{tabular}
\caption{Dynamic weighting vs static weighting of CoOp+.}
\label{tab:dynamic_vs_static}
\end{table}

\begin{table}[t]
	\tabstyle{4pt}
		\renewcommand{\arraystretch}{0.8}
		\begin{tabular}{c | l | c c c}
			\toprule
			No. & OOD Detector & Base & Novel & H \\
			\midrule
			0 & MSP of zero-shot classifier only & 78.74 & 73.61 & 76.09 \\
			1 & MSP of few-shot classifier only & 81.06 & 72.30 & 76.43 \\
			\midrule
			2 & MSP (best $\alpha=64$) & 82.11 & 72.76 & \textbf{77.15}  \\
			3 & MSP ($\alpha=\infty$) & 82.13 & 72.70 &  77.13 \\
			\midrule
			4 & MaxLogit (best $\alpha=8$) & 78.45 & 73.52 &75.86 \\
			5 & MaxLogit ($\alpha=\infty$) & 76.64 & 72.53 & 74.53 \\
			\midrule
			6 & Energy (best $\alpha=64$) & 78.70 & 73.19 &  75.85 \\
			7 & Energy ($\alpha=\infty$) & 75.66 & 72.37 & 73.98 \\
			\midrule
			8 & Entropy (best $\alpha=64$) & 82.20 & 72.15 & 76.85 \\
			9 & Entropy ($\alpha=\infty$) & 82.22 & 72.03 & 76.79  \\
			\bottomrule
		\end{tabular}
\caption{Different OOD detection methods in CoOp+.}
\label{tab:ood_scoring_func}
\end{table}

\begin{figure}[h!]
	\captionsetup[subfigure]{aboveskip=-1pt,belowskip=-1pt}
	\centering
	\begin{subfigure}[t]{0.49\linewidth}
		\includegraphics[width=1.0\linewidth]{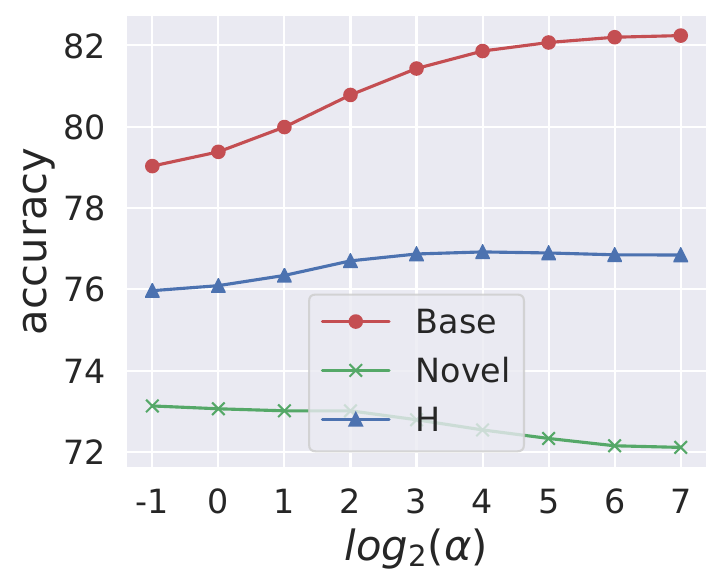}
		\caption{CoOp+ with Entropy.}
		\label{fig:CoOp_alpha_base_new}
	\end{subfigure}
	\hspace{-2pt}
	\begin{subfigure}[t]{0.49\linewidth}
		\includegraphics[width=1.0\linewidth]{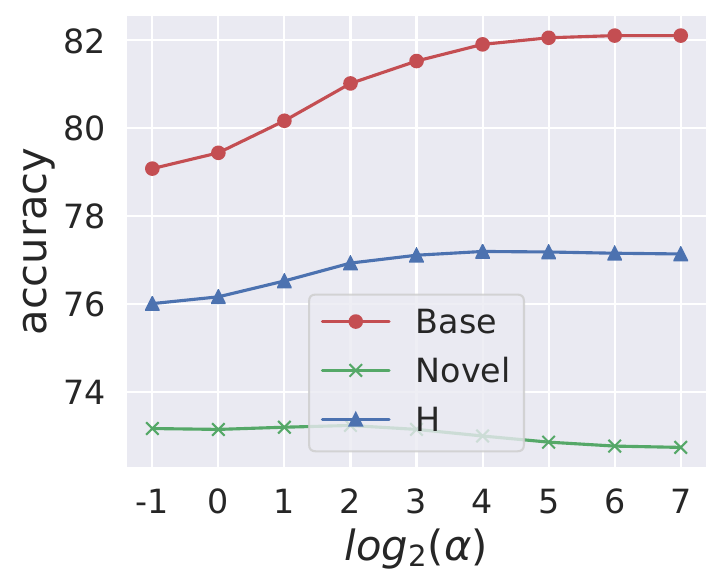}
		\caption{CoOp+ with  MSP.}
		\label{fig:CoOp_MSP_base_new}
	\end{subfigure}
	\caption{Effect of $\alpha$ on accuracies in CoOp+.}
	\label{fig:alpha_tradeoff}
\end{figure}

\begin{table}[t]
	\tabstyle{3.2pt}
		\renewcommand{\arraystretch}{0.9}
		\begin{tabular}{l c c c c c c}
			\toprule
			\multirow{2}{*}{Method} & \multirow{2}{*}{Backbone} & \multicolumn{3}{c}{Base-to-Novel} & \multicolumn{2}{c}{Domain}\\
			\cmidrule(lr){3-5} \cmidrule(lr){6-7}
			&  & Base & Novel & H & Source & Target\\
			\midrule
			CoOp& ResNet-50&77.17 & 59.03 & 66.89&62.99&41.05\\
			CoOp+& ResNet-50& \textbf{77.25} & \textbf{64.02} & \textbf{70.20} &\textbf{63.08} &\textbf{42.13}\\
			&&&{+4.99}&{+3.31}&&{+1.08}\\
			\midrule
			CoCoOp& ResNet-50 & \textbf{74.68} & 63.48 & 68.63 &\textbf{63.16}&\textbf{42.95}\\
			CoCoOp+&ResNet-50&74.28 & \textbf{66.49}& \textbf{70.17} & 62.34 & 42.90 \\
			&&&{+3.01}&{+1.54}&&{-0.05}\\
			\midrule
			ProGrad & ResNet-50 & \textbf{77.14} & 63.37 & 69.58 & \textbf{63.66} & 42.11 \\
			ProGrad+ & ResNet-50 & 76.09 & \textbf{67.39} & \textbf{71.48} & 62.92 & \textbf{42.67} \\
			&&&{+4.02}&{+1.90}&&{+0.56}\\
			\midrule
			KgCoOp & ResNet-50 & \textbf{76.42}& 64.93& 70.21& 62.52& 43.18\\
			KgCoOp+ & ResNet-50 & 74.52 & \textbf{68.01}& \textbf{71.12}& 61.33& 42.45\\
			&&&{+3.08}&{+0.91}&&{-0.73}\\
			\midrule
			\midrule
			CoOp & ViT-B/32& \textbf{79.24} & 62.77 & 70.05 &\textbf{66.72} &48.32\\
			CoOp+ & ViT-B/32& 78.99 & \textbf{67.85} & \textbf{73.00} &66.68&\textbf{49.21}\\
			& && {+5.08}&{+2.95}&&{+0.89}\\
			\midrule
			CoCoOp & ViT-B/32 & \textbf{76.26}& 66.63& 71.12& \textbf{66.07} & \textbf{49.89}\\
			CoCoOp+ & ViT-B/32 & 75.55& \textbf{70.58}& \textbf{72.98}& 65.39& 49.84\\
			& & &{+3.95}&{+1.86}&&{-0.05}\\
			\midrule
			ProGrad & ViT-B/32 & \textbf{78.75}& 66.19& 71.93& \textbf{66.98}& 49.33\\
			ProGrad+ & ViT-B/32 & 77.24 & \textbf{70.48}&\textbf{ 73.71}& 66.21& \textbf{49.59}\\
			& & & {+4.29}&{+1.78}&&{+0.26}\\
			\midrule
			KgCoOp & ViT-B/32 & \textbf{78.29} & 69.87& 73.84& \textbf{65.57}&\textbf{49.79} \\
			KgCoOp+ & ViT-B/32 & 76.37 & \textbf{71.61}& \textbf{73.91}& 64.68& 49.27\\
			& &&{+1.74}&{+0.07}&&{-0.52}\\
			\bottomrule
		\end{tabular}
	\caption{Base-to-novel and domain generalization results with different vision backbones for 16 shots. Mean accuracies over four target domains are reported. }
	\label{tab:resnet50_coop}
\end{table}

\subsection{Ablation Study}
\textbf{Static Weighting:} We can also use static (fixed) weight $s(\mathbf{x})$ for all test samples. The results of using static weights ($s(\mathbf{x})=0.05,0.25,0.50,0.75,0.95$) and dynamic weights ($\alpha=64$) are compared in Table~\ref{tab:dynamic_vs_static}. Using fixed weights also implies the OOD detectors with 50\% accuracy are adopted. The improved result of dynamic weighting demonstrates that weak OOD detectors help to boost generalizability across different classes.

\noindent\textbf{Different OOD Detection Methods:} The proposed method is flexible, it supports many different OOD detection methods. Besides the aforementioned ID scores, the maximum softmax probability (MSP) of the zero-shot and few-shot classifier can also serve as the weight, i.e., $s(\mathbf{x})=1-ids^{zs}(\mathbf{x})$ and $s(\mathbf{x})=ids^{fs}(\mathbf{x})$. The results of different methods are compared in Table~\ref{tab:ood_scoring_func}. When using sigmoid function, the results with the best $\alpha$ in $\{0.5,1,2,4,8,16,32,64,128\}$ and infinite $\alpha$ are reported. From the table, MSP (Row 2) achieves the best result, slightly better than entropy based method (Row 8). For different ID scores, using sigmoid function with smaller $\alpha$ is better than Heaviside step function ($\alpha=\infty$). The underlying reason is that continuous weights are helpful for model fusion compared to binary weights. We also find that MaxLogit and Energy do not work well, which is contrary to the observations in prior work of OOD detection. We conjecture that, for these methods, the ID scores from different classifiers are not well normalized for comparison.

\noindent\textbf{Effect of Scaling Factor $\boldsymbol{\alpha}$:} The $\alpha$ is the scaling factor of the score difference. Its effect for entropy and MSP based methods are shown in Fig.~\ref{fig:alpha_tradeoff}. Larger $\alpha$ damages the performance on novel classes, but improves the performance on base classes. Thus, it in fact plays a role in balancing the two performances. This makes the balance tunable compared to using Heaviside step function. 

\noindent\textbf{Different Backbones:} The proposed method also works for other vision backbones. To validate this point, the base-to-novel and domain generalization results with ResNet-50 and ViT-B/32 are presented in Table~\ref{tab:resnet50_coop}. With OOD detection, the harmonic mean accuracies in base-to-novel setting are improved uniformly (0.07\%$\sim$3.31\%). Meantime, methods with OOD detection can achieve similar or better mean accuracies on target domains in most cases (6 out of 8).

\section{Conclusion}
Existing methods for vision-language prompt tuning have poor generalization ability. This work proposes a new viewpoint for balancing the base and novel performance across classes by considering this problem as an OOD detection problem. Based on this idea, we propose a competition based scoring function to generate dynamic weight of zero-shot and few-shot classifier. By extensive experiments, even with weak OOD detectors, fused classifier demonstrates better generalization ability compared to the original one. One limitation of our method is that the effectiveness may be less significant when the two classifiers being fused have little complementarity. Future work will focus on exploiting stronger OOD detectors and addressing the above limitation.

\bibliography{aaai24}

\end{document}